\def\BState{\State\hskip-\ALG@thistlm}
\newtheorem{theorem}{Theorem}[section]
\newtheorem{lemma}[theorem]{Lemma}
\newtheorem{remark}{Remark}[section]
\newcommand{\G}{{\mathcal{G}}}
\newcounter{rcnt}[section]
\newcommand{\vast}{\bBigg@{3}}
\newcommand{\Vast}{\bBigg@{4}}
\def\G{\mathcal{G}}
\def\T{\mathcal{T}}
\def\detwo{\mathsf{DE_2}}
\def\nbs{\mathsf{NBS}}
\def\usvt{\mathsf{USVT}}
\def\de{\mathsf{DE}}
\def\nbse{\mathsf{NBSE}}
\def\dist{\mathsf{DIST}}
\def\ave{\mathrm{ave}}
\def\corr{\mathrm{corr}}
\def\ber#1{\mathrm{Ber}({#1})}
\title{\textbf{Graphon Estimation from Partially Observed Network Data}}
\author{%
  \large{Soumendu Sundar Mukherjee} \\ 
  \large{Interdisciplinary Statistical Research Unit (ISRU)}\\
  \large{Indian Statistical Institute, Kolkata}\\
  \large{Kolkata 700108, India}\\
 \large{\texttt{soumendu041@gmail.com}} \\
 \and
  \large{Sayak Chakrabarti}\\
  \large{Computer Science and Engineering}\\
  \large{Indian Institute of Technology, Kanpur}\\
  \large{Kanpur, UP 208016, India}\\
 \large{\texttt{sayak@iitk.ac.in}}
}
\begin{document}

\maketitle

\begin{abstract}
 We consider estimating the edge-probability matrix of a network generated from
a graphon model when the full network is not observed---only some overlapping
subgraphs are. We extend the neighbourhood smoothing ($\nbs$) algorithm of
\citet{zhang2017estimating} to this missing-data set-up and show experimentally that, for a wide range of
graphons, the extended $\nbs$ algorithm achieves significantly smaller error rates
than standard graphon estimation algorithms such as vanilla neighbourhood
smoothing ($\nbs$), universal singular value thresholding ($\usvt$), blockmodel
approximation, matrix completion, etc. We also show that the extended $\nbs$ algorithm is much more robust to missing data.
\end{abstract}

\section{Introduction}\label{sec:intro}
Graphons are limits of dense graph-sequences \citet{lovasz2012large}. A graphon $h$ is a symmetric measurable function from $[0, 1]^2$ to $[0, 1]$. Probabilistic models of networks based on graphons are a particular type of latent space model. The first such model is due to \citet{bollobas2007phase}. \citet{bickel2009nonparametric} also considered this model. A graphon is used here as a non-parametric link function---there are i.i.d. $\mathrm{Uniform}(0, 1)$ latent characteristics $\xi_i$ of each node in a network, and given these characteristics, an edge is formed between a pair of nodes $i$ and $j$, independently of all other edges, with probability $P_{ij} = h(\xi_i, \xi_j)$. Now, there are inherent unidentifiability issues in such a model. For any measure preserving bijection $\phi$ on $[0, 1]^2$, the graphon $h \circ \phi$ also gives rise to the same probability model. Thus graphons can only be estimated up to equivalences classes. Or to remove the identifiability issue, one needs to make further assumptions on the graphon, e.g. monotone degrees \citet{chan2014consistent}. The other estimation problem is that of estimating the probability matrix $P$ coming from a graphon, which is a well-defined problem. In this paper, we will mean by graphon estimation this latter problem.

The problem of estimating the underlying graphon from an observed network has attracted much attention in recent past. \citet{airoldi2013stochastic} considered a stochastic blockmodel approximation to a graphon. \citet{zhang2017estimating} devised an elegant neighbourhood smoothing estimator for graphons. The universal singular value thresholding ($\usvt$) method of \citet{chatterjee2015matrix} is capable of estimating graphons. The matrix completion method of \citet{keshavan2010matrix} can also be used for this purpose. \citet{gao2015rate} obtained the minimax rate for graphon estimation and proposed a combinatorial algorithm that achieves it.

Although graphon estimation has been studied in some detail for fully observed networks, its study under missing data set-ups is perhaps more important. This is because when collecting network data one is hardly certain about all the edges. There is enormous scope of application if one is able to predict links when one suspects that a zero in the adjacency matrix is possibly indicating missing data and not the absence of an edge. More generally, such link prediction problems have been considered by many authors (\citet{liben2007link,al2006link,lu2011link}).

Recently, graphon estimation under a missing data set-up where one observes full ego-networks of some (but not all) individuals in a network has been carried out in \citet{wu2018link}. In general, the problem of link prediction with partially observed data has been tackled before in \citet{zhao2017link, gaucher2019maximum}.

In this paper too, we study graphon estimation under a missing data set-up. The missing data model we study is very different from that of \citet{wu2018link}. Instead of ego-networks as in their paper, one observes, in our model, certain overlapping subgraphs. We extend the neighbourhood smoothing estimator of \citet{zhang2017estimating} to this missing data set-up by devising a method based on the triangle inequality to extend a distance matrix to all of the network, when one actually has some estimate of the distances within the overlapping subgraphs. The case where there are only two overlapping subgraphs is easier to tackle and we use this as a building block for a more general algorithm for the case when there are more than two overlapping subgraphs.

Through extensive numerical study on simulated and real world graphs, we show that the extended $\nbs$ algorithm, for a wide range of graphons, vastly outperforms standard graphon estimation methods such as vanilla neighbourhood
smoothing ($\nbs$), universal singular value thresholding ($\usvt$), blockmodel approximation, matrix completion, etc.

The rest of the paper is organized as follows. In Section~\ref{sec:setup} we describe in detail the missing data model we consider. Then, in Section~\ref{sec:methodology}, we briefly recap the $\nbs$ algorithm of \citet{zhang2017estimating} and then extend it to our set-up. Section~\ref{sec:exp} contains our empirical results. We finally end with some concluding remarks and future directions in Section~\ref{sec:conc}.

\section{Problem set-up}\label{sec:setup}
Suppose we observe $T$ subgraphs $G_1 = (V_1, E_1), \ldots, G_T = (V_T, E_T)$ of some simple undirected graph $G = (V, E)$ on $|V| = n$ vertices, generated from some graphon $h$. For ease of notation we will take $V = \{1, \ldots, n\}$. Furthermore, assume that the vertex sets of these subgraphs have some intersection. To make it precise, define a super-graph $\G$ on $T$ nodes, where the $i$-th node represents $V_i$. Put an edge between nodes $i$ and $j$ if $V_i \cap V_j \ne \emptyset$. Assume that $\G$ is connected. We also assume that $V = \cup_t V_t$, i.e. these subgraphs cover the whole graph. 

Thus, if $A$ denotes the adjacency matrix of $G$, then there are (unobserved) $\mathrm{Uniform}(0, 1)$ variables $\xi_i, 1 \le i \le n$, such that 
\begin{equation}
  A_{k\ell} \overset{i.i.d.}{\sim} \ber{h(\xi_k, \xi_\ell)},  1\le k\ne \ell \le n.
\end{equation}
By $A_t$ we denote the adjacency matrix of network $G_t$. Then, using a subsetting notation, $A_t = A(V_t, V_t)$. Let 
\begin{equation}
  \mathcal{O} := \cup_{\ell = 1}^T V_\ell \times V_\ell = \{(i, j) \mid i, j \in V_\ell \text{ for some } \ell = 1, \ldots, T\},
\end{equation}
be the set of observed pairs. Now we observe the $n \times n$ matrix $A^{\mathrm{obs}}$, where  
\begin{equation}
  A^{\mathrm{obs}}_{k\ell} = \begin{cases} 

  A_{k\ell} & \text{if } (k, \ell) \in \mathcal{O}, \text{ and} \\
  0 & \text{otherwise}.
  \end{cases} 
\end{equation}

The goal is to estimate the probability matrix $P$, where $P_{k\ell} = h(\xi_k, \xi_\ell)$, given $A^{\mathrm{obs}}$. See Figure~\ref{fig:overlap_schematic} for an example of this set-up.

\begin{remark}
Although we are assuming that there is some big network $G$ of which some overlapping subgraphs $G_t$ are observed, it is quite straightforward to adapt our approach to the case where one observes $T$ graphs coming from the same graphon $h$ where the vertex sets of these graphs have some intersection in the sense of the corresponding super-graph being connected. 
\end{remark}

\begin{figure}[!htbp]
\caption{(a) A graph on 16 nodes, with 4 observed overlapping subgraphs; (b) the corresponding super-graph $\G$.}
\label{fig:overlap_schematic}
\centering
\begin{tabular}{cc}
\includegraphics[width = 0.35\textwidth]{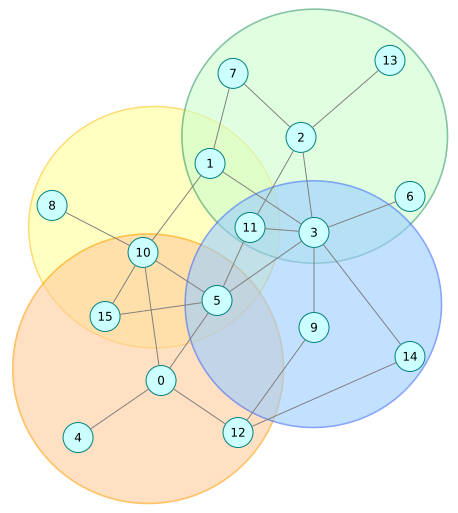} & \includegraphics[width = 0.6\textwidth]{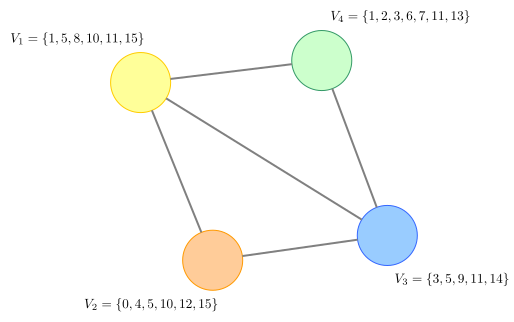} \\
(a) & (b)
\end{tabular}
\end{figure}

\section{Methodology}\label{sec:methodology}
We will generalize the neighbourhood smoothing approach of \citet{zhang2017estimating}. Their approach is to construct a certain neighbourhood $N_i$ for each node $i$. This is done by first calculating a distance measure $\bar{d}(i, j)$ between each pair of vertices $(i, j)$, and then saying that $j \in N_i$ if $\bar{d}(i, j)$ is less than a certain threshold. Once these neighbourhoods have been constructed, $P_{ij}$ can estimated by
\begin{equation}
  \tilde{P}_{ij} = \frac{\sum_{i' \in N_i}A_{i'j}}{|N_i|}.
\end{equation}
However, this is not symmetric, so \citet{zhang2017estimating} take $(\tilde{P}_{ij} + \tilde{P}_{ji})/2$ as the final estimate.

The distance measure that \citet{zhang2017estimating} use is
\begin{equation}
\bar{d}(i, i') = \sqrt{\max_{k \neq i, i'} \frac{\langle A_i - A_{i'}, A_k \rangle}{n}}.
\end{equation}
We refer the reader to Section 2.2 of \citet{zhang2017estimating} for details on how one obtains this distance measure. One thing to note here is that $\bar{d}$ is not a metric, it tries to approximate one though.

By $\dist(A)$ we denote an algorithm that takes as input the adjacency matrix $A$, and outputs a distance matrix $D$ where $D_{ij} = \bar{d}(i, j)$. Once we have such a distance matrix $D$, the next algorithm does neighbourhood smoothing.


\begin{algorithm}[H]
  \caption{$\nbs$: Neighbourhood Smoothing by \citet{zhang2017estimating}. Input: the graph adjacency matrix $A$, and a distance matrix $D$.}
  \label{alg_nbs}
  \begin{algorithmic}[1]

    \State \textbf{Neighbourhood construction:} Let $q_i(h)$ denote the $h$-th sample quantile of the set $\{D_{ii'} \mid i \neq i'\}$, where $h$ is a tuning parameter. Set, for $i = 1, \ldots, n$,
    \[
      N_i = \{i' \mid i' \neq i, D_{ii'} \leq  q_i(h)\}.
    \]

    \State \textbf{Neighbourhood smoothing:} For all $i \ne j$, set
    \[
      \hat{P}_{ij} = \frac{1}{2}\bigg(\frac{\sum_{i' \in N_i}A_{i'j}}{|N_i|} + \frac{\sum_{i' \in N_j}A_{i'i}}{|N_j|}\bigg).
    \]  

    \State Return $\hat{P}$.
  \end{algorithmic}
\end{algorithm}
From their theoretical considerations, \citet{zhang2017estimating} recommend the choice $h \asymp \sqrt{\frac{\log n}{n}}$. We will also use this recommendation in our extended $\nbs$ algorithm.
\subsection{Distance Extension}
We first discuss the $T = 2$ case, which will then be used to tackle the general case.
\subsubsection{The \texorpdfstring{$T = 2$}{} case}

Suppose, like \citet{zhang2017estimating}, we have a measure of distance $d(i, i')$ between the nodes of a network. We can use the triangle inequality and the common intersection between the two networks to estimate distances between nodes that are part of different graphs. To elaborate, if $i \in V_1\setminus V_2$, $j \in V_2\setminus V_1$, then we define
\begin{equation}
  d_M(i, j) = \min_{r \in V_1 \cap V_2} d(i, r) + d(j, r).
\end{equation}
The triangle inequality can be used again to obtain a lower bound. Since $d(i, j) \ge |d(i, r) - d(j, r)|$, we define
\begin{equation}
  d_m(i,j) = \max_{r \in V_1 \cap V_2} |d(i, r) - d(j, r)|.
\end{equation}
If $d$ were a true distance, then we would have
\begin{equation}
  d_m(i, j) \le d(i, j) \le d_M(i, j).
\end{equation}
So we may take our estimate to be some average of $d_m$ and $d_M$.
\begin{equation}
  \tilde{d}(i, j):= \ave(d_m(i, j), d_M(i, j)).
\end{equation}
Experimentally we did not find much differences between different types of averages. Overall, the harmonic mean $\ave(x, y) = \frac{2 x y}{x + y}$ seemed to perform well.

Also, on $V_1 \cap V_2$ we may have two potentially different values of $\tilde{d}(i, j)$ coming from the two different graphs. We choose the arithmetic mean of these two values and assign that to $\tilde{d}(i, j)$.

Thus we have a measure of distance between any two vertices in $V$. So we can define a neighbourhood smoothing estimator of $P$ just like \citet{zhang2017estimating}. To that end, we first describe the distance extension algorithm.
\begin{algorithm}[H]
  \caption{$\detwo$: Distance Extension for $T = 2$. Input: two overlapping subgraphs $G_1$, $G_2$.}
  \label{alg_de2}
  \begin{algorithmic}[1]
    \State \textbf{Distance calculation for subgraphs:} Use the $\dist$ algorithm to calculate $D_1$ and $D_2$, node-distance matrices for $G_1$ and $G_2$.

    \State \textbf{Extending the distance:} For $i \in V_1 \setminus V_2$, $j \in V_2 \setminus V_1$, set
    \[
    D_{ij} = \ave(\min_{r \in V_1 \cap V_2}(D_{1,ir} + D_{2,jr}), \max_{r \in V_1 \cap V_2}|D_{1,ir} - D_{2,jr}|).
    \]
    \State Return $(G_1 \cup G_2, D)$.
  \end{algorithmic}
\end{algorithm}
\subsubsection{The general case}
 In this case, we have $T$ overlapping subgraphs. As described in Section~\ref{sec:setup}, it is more illuminating to consider the super-graph $\G$ on $T$ nodes, where the $i$-th node $V_i$ and there is an edge between nodes $i$ and $j$ if $V_i \cap V_j \ne \emptyset$. We assume that $\G$ is connected. Given $i \in V_a$, $i' \in V_b$, we will try to estimate $d(i, i')$ using the overlaps. As $\G$ is connected, there is a path of overlapping subgraphs $V_a \sim V_{\ell_1} \sim \cdots \sim V_{\ell_t} \sim V_b$. 

 The issue is that, e.g., computing max of sum of distances along all possible chains of between vertices from these overlapping graphs is expensive (for $T = 2$, this was fine). So, as a compromise, we take a spanning tree $\T$ of $\G$. On this tree, we visit each node on a particular traversal $\tau = (\tau_r)_{r = 1}^k$, a finite sequence of adjacent nodes of $\T$ which covers all the vertices. Say that the traversal is $G_{\tau_1} \rightarrow G_{\tau_2} \rightarrow \cdots \rightarrow G_{\tau_\ell}$. At point $k + 1$ of the traversal, we apply $\detwo$ on $\cup_{r = 1}^k G_{\tau_r}$ and $G_{\tau_{k + 1}}$ to get a distance matrix on all of  $\cup_{r = 1}^{k + 1} G_{i_r}$. At the end we get a distance matrix $D_{\T, \tau}$ that depends on both the tree $\T$ and the particular traversal $\tau$. Finally, we do this several times over a number of spanning trees $\T$ and traversals $\tau$ thereof, and take the average of all the resulting $D_{\T, \tau}$ as the final estimate of $D$. 

We now describe this algorithm in detail.
\begin{algorithm}[H]
    \caption{$\de$: Distance Extension. Input: a spanning tree $\T$ of the super-graph $\G$, and a traversal $\tau = (\tau_r)_{r = 1}^\ell$ of $\T$.}\label{alg_de}
    \begin{algorithmic}[1]
      \State Set $(G_{full}, D_{full}) = (G_{\tau_1}, D_{\tau_1})$.
      \For{ $k$ in $2, \ldots, \ell$}
      \If{$\tau_k \notin \{\tau_1, \ldots, \tau_{k - 1}\}$}
      \State Update: $(G_{full}, D_{full}) \leftarrow \detwo((G_{full}, D_{full}), (G_{\tau_{k}}, D_{\tau_k}))$.
      \EndIf
      \EndFor
      \State Return $(G_{full}, D_{full})$.
    \end{algorithmic}
\end{algorithm}

   \textbf{What spanning trees to take?} We can take several uniformly random spanning trees. An alternative to this is to take a maximal spanning tree $\T$ of $\G$ where the edge-weights are the overlaps, and consider its traversals only. In Section~\ref{sec:exp} we perform experiments to show the impact of traversals.

      
      

\subsection{Neighbourhood Smoothing, Extended}
Once we have computed a distance matrix on the full graph, we can do the usual neighbourhood smoothing on the matrix $A$. Now we describe this extended $\nbs$ algorithm.

\begin{algorithm}[H]
  \caption{$\nbse_0$: Neighbourhood Smoothing, Extended (baby version). Input: $G_1, \ldots, G_T$.}
  \label{mainalg_baby}
  \begin{algorithmic}[1]
    \State \textbf{Distance calculation:}
    Take $I$ trees $\T_1, \ldots, \T_I$ and, for each tree $\T_i$, $J$ traversals $\tau_{1}^{(i)}, \ldots, \tau_{J}^{(i)}$. Construct the distance matrix estimate
  \[
    D = \frac{1}{IJ}\sum_{i = 1}^{I}\sum_{j = 1}^J \de(\T_i, \tau_{j}^{(i)}).
  \]
    \vskip5pt
    \State \textbf{Neighbourhood smoothing:} $\hat{P} = \nbs(A^{\mathrm{obs}}, D).$
  \end{algorithmic}
\end{algorithm}

Note that so far our goal has been to estimate the neighbourhoods better than what a vanilla $\nbs$ algorithm would do.
However, when we estimate $P_{ij}$ as done in $\nbse_0$, we are underestimating the numerator, because we are replacing unobserved edges by $0$. This can be corrected for to some extent by the following prescription: Let $\hat{P}^{(0)}$ be the estimate we get from $\nbse_0$, and let $N_i$ denotes the neighbourhood of $i$ constructed in $\nbse_0$. We then correct $\hat{P}^{(0)}_{ij}$ by replacing unobserved edges by their corresponding estimated edge probabilities, obtained from $\nbse_0$:

\begin{equation}
  \hat{P}_{ij}^{(1)} = \frac{1}{2}\Vast(\frac{\sum_{\stackrel{i' \in N_i}{(i',j) \in \mathcal{O}}}A_{i'j} + \sum_{\stackrel{i' \in N_i}{(i',j) \notin \mathcal{O}}}\hat{P}^{(0)}_{i'j}}{|N_i|} + \frac{\sum_{\stackrel{i' \in N_j}{(i',i) \in \mathcal{O}}}A_{i'i} + \sum_{\stackrel{i' \in N_j}{(i',i) \notin \mathcal{O}}}\hat{P}^{(0)}_{i'i}}{|N_j|}\Vast).
\end{equation}
Let us denote the above procedure as $F_{\corr}$, i.e. $\hat{P}^{(1)} = F_{\corr}(\hat{P}^{(0)})$. This procedure can be repeated a few times until the estimates get stable. That is, after we get $\hat{P}^{(1)}$, we can correct it further by the same procedure: $\hat{P}^{(2)} = F_{\corr}(\hat{P}^{(1)})$ and so on. The following simple lemma shows that this iterative scheme always converges.

\begin{lemma}
The iterations
\[
  \hat{P}^{(t + 1)} = F_{corr}(\hat{P}^{(t)}), \hat{P}^{(0)} = \nbse(G_1, \ldots, G_T)
\]
increase to a probability matrix $\hat{P}^{(\infty)}$.
\end{lemma}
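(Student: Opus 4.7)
The plan is to prove two properties of the sequence $(\hat{P}^{(t)})_{t\ge 0}$: (i) it is entry-wise monotonically non-decreasing; (ii) every entry stays in $[0,1]$. Once both are established, the scalar monotone convergence theorem applied entry-wise (the matrix is finite-dimensional) yields a limit $\hat{P}^{(\infty)}$ whose entries lie in $[0,1]$, i.e. a probability matrix.

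First I would verify the base step $\hat{P}^{(1)} \ge \hat{P}^{(0)}$. Writing the two displayed fractions in the definition of $F_{\mathrm{corr}}$, observe that each numerator in $F_{\mathrm{corr}}(\hat{P}^{(0)})$ differs from the corresponding numerator in $\hat{P}^{(0)}$ only on the indices $i' \in N_i$ with $(i',j) \notin \mathcal{O}$: in $\hat{P}^{(0)}$ these contribute $A^{\mathrm{obs}}_{i'j} = 0$, whereas in $F_{\mathrm{corr}}(\hat{P}^{(0)})$ they contribute $\hat{P}^{(0)}_{i'j} \ge 0$. Since the denominators $|N_i|,|N_j|$ are unchanged and the neighbourhoods $N_i$ are fixed once and for all at the $\nbse_0$ stage, this immediately gives $\hat{P}^{(1)}_{ij} \ge \hat{P}^{(0)}_{ij}$ for every $i\ne j$.

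Next I would note that $F_{\mathrm{corr}}$ is itself a coordinate-wise monotone operator: if $Q \ge Q'$ entry-wise, then $F_{\mathrm{corr}}(Q) \ge F_{\mathrm{corr}}(Q')$, because the right-hand side of $F_{\mathrm{corr}}$ is a sum of terms that are either constants (the observed $A_{i'j}$) or non-negative linear functions of the input matrix. Combining this with the base step and induction on $t$ yields $\hat{P}^{(t+1)} \ge \hat{P}^{(t)}$ for every $t$.

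Finally, I would prove by induction that each $\hat{P}^{(t)}$ has entries in $[0,1]$. The base case holds since $\hat{P}^{(0)}$ is output by vanilla $\nbs$ on a $\{0,1\}$-valued matrix, hence is an average of values in $[0,1]$. For the inductive step, if $\hat{P}^{(t)}_{ij} \in [0,1]$ for all $i,j$, then each numerator in the definition of $\hat{P}^{(t+1)}_{ij}$ is a sum of $|N_i|$ (respectively $|N_j|$) values in $[0,1]$, so the average of the two parenthesised fractions also lies in $[0,1]$. Thus the sequence is coordinate-wise non-decreasing and bounded above by $1$, so by monotone convergence it has a limit $\hat{P}^{(\infty)}$ with entries in $[0,1]$. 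The only mildly delicate point is to keep in mind that the neighbourhoods $N_i$ are frozen after $\nbse_0$ and therefore do not themselves depend on $t$; I do not foresee any substantive obstacle beyond this bookkeeping.
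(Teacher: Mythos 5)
Your proposal is correct and follows essentially the same route as the paper: establish the base inequality $\hat{P}^{(1)} \ge \hat{P}^{(0)}$ from the fact that unobserved entries contribute $0 \le \hat{P}^{(0)}_{i'j}$, propagate monotonicity by induction (your observation that $F_{\corr}$ is an entry-wise monotone affine map is exactly what the paper's explicit difference formula expresses), bound the entries above, and invoke monotone convergence coordinate-wise. Your induction showing that every iterate stays in $[0,1]$ is a slightly more careful justification of the upper bound than the paper's stated ``trivial'' bound, but it is a bookkeeping refinement rather than a different argument.
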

\begin{proof}
Note first that $\hat{P}^{(1)}_{ij} \ge \hat{P}^{(0)}_{ij}$ for all $i, j$, because, for an unobserved pair $(k, k')$, we have $A_{kk'} = 0 \le \hat{P}^{(0)}_{kl}$. Also,
\begin{equation}\label{eq:difference}
  \hat{P}^{(t + 2)}_{ij} - \hat{P}^{(t + 1)}_{ij} = \frac{1}{2}\Vast( \frac{\sum_{\stackrel{i' \in N_i}{(i',j) \notin \mathcal{O}}}\hat{P}^{(t+1)}_{i'j} - \hat{P}^{(t)}_{i'j}}{|N_i|} + \frac{\sum_{\stackrel{i' \in N_j}{(i',i) \notin \mathcal{O}}}\hat{P}^{(t + 1)}_{i'i} - \hat{P}^{(t)}_{i'i}}{|N_j|} \Vast).
\end{equation}
Hence, $P^{(2)}_{ij} \ge \hat{P}^{(1)}_{ij}$, and so on. That is, $P^{(t)}_{ij}$ is an increasing sequence. But, we have the trivial upper bound
\[
  \hat{P}_{ij}^{(t)} \le \frac{1}{2}\Vast(\frac{\sum_{\stackrel{i' \in N_i}{(i',j) \in \mathcal{O}}}A_{i'j} + \sum_{\stackrel{i' \in N_i}{(i',j) \notin \mathcal{O}}}1}{|N_i|} + \frac{\sum_{\stackrel{i' \in N_j}{(i',i) \in \mathcal{O}}}A_{i'i} + \sum_{\stackrel{i' \in N_j}{(i',i) \notin \mathcal{O}}}1}{|N_j|}\Vast).
\]
Therefore, being an increasing sequence bounded from above, $\hat{P}^{(t)}$ converges to some $\hat{P}^{(\infty)}_{ij} \in [0, 1]$.
\end{proof}

In practice, we continue these iterations until $n^{-1}\|\hat{P}^{(t + 1)} - \hat{P}^{(t)}\|_F$ becomes smaller than a pre-specified threshold. Now we are in a position to describe the full algorithm.

\begin{algorithm}[H]
  \caption{$\nbse$: Neighbourhood Smoothing, Extended. Input: $G_1, \ldots, G_T$, a threshold $\epsilon$.}
  \label{mainalg}
  \begin{algorithmic}[1]
    \State \textbf{Initialization:}
    Set $\hat{P}^{(0)} = \nbse_0(G_1, \ldots, G_T), \Delta = \epsilon, t = 0$.

    \vskip5pt
    \While{$\Delta \ge \epsilon$}
      \State $t = t + 1$.
      \State $P^{(t)} = F_{\corr}(P^{(t - 1)})$.
      \State $\Delta = n^{-1}\|P^{(t)} - P^{(t - 1)}\|_F$.
    \EndWhile
    \State Return $P^{(t)}$.
  \end{algorithmic}
\end{algorithm}




\section{Results}\label{sec:exp}
\subsection{Simulations}
In the $T = 2$ case, We generated networks of size $n=1000$ from six graphons (see Figure~\ref{fig:graphon-examples}), observed were two random subgraphs of size $500+m$, where $m$ controls the size of the overlap. See Figure~\ref{fig:graphon-t2-err} for a comparison between various algorithms. In all of these, we see huge improvement achieved by $\nbse$ especially when $m$ is small.
\begin{figure}[!htbp]
  \caption{Heatmaps of some graphons: (a) $h(x, y) = \sin(5 \pi (x + y - 1) + 1)/2 + 0.5$; (b) $h(x, y) = 1 - 0.5 \max(x, y)$; (c) $h(x, y) = 1 - (1 + \exp(-15(0.8 |x - y|))^{4/5} - 0.1))^{-1}$; (d) $h(x, y) = \frac{x^2 + y^2}{3} \cos\big(\frac{1}{x^2 + y^2} \big) + 0.15$; (e) $h(x, y) = \frac{1}{1 + \exp(-x -y)}$; (f) $h(x, y) = 0.3 \,\mathbf{I}_{\{\lfloor 2x \rfloor = \lfloor 2y \rfloor\}} + 0.03 \,\mathbf{I}_{\{\lfloor 2x \rfloor \ne \lfloor 2y \rfloor\}}$.}
     \label{fig:graphon-examples}
   \centering
   \begin{tabular}{ccc}
   \includegraphics[width=0.30\textwidth]{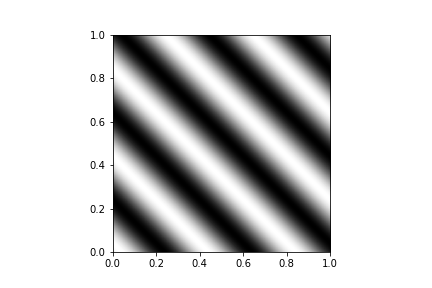} & \includegraphics[width=0.30\textwidth]{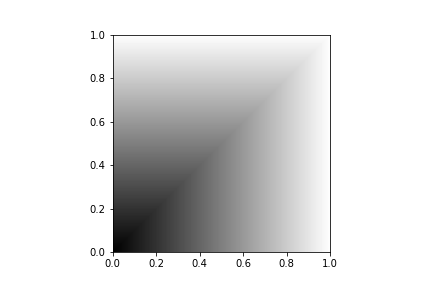} & \includegraphics[width=0.30\textwidth]{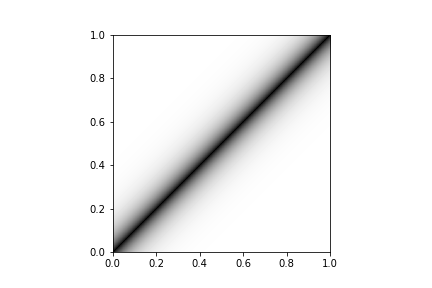} \\
    (a) & (b) & (c) \\
  \includegraphics[width=0.30\textwidth]{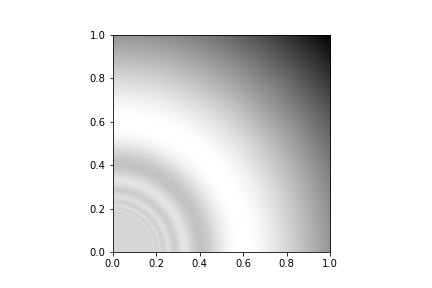} & \includegraphics[width=0.30\textwidth]{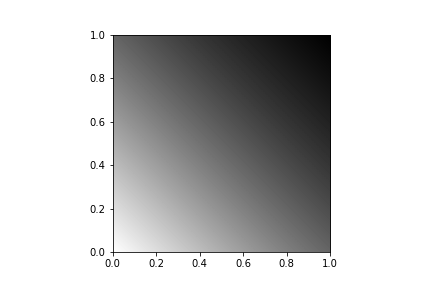} & \includegraphics[width=0.30\textwidth]{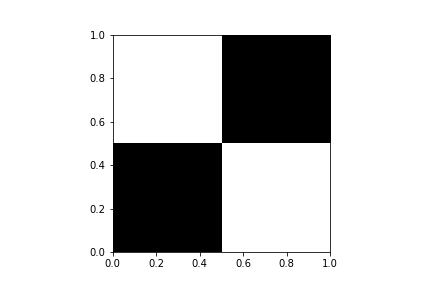} \\
     (d) & (e) & (f) \\
   \end{tabular}
\end{figure}

In another experiment, we consider a missing data scenario as depicted in Figure~\ref{fig:covers}-(b). Five subgraphs were observed. In Figure~\ref{fig_barplot}, we plot the estimation error of $\nbse$ for different traversals (traversals are paths in this example) and also for the case when we take an averaged distance matrix over paths as described in the $\nbse_0$ algorithm. In general, we do not find any significant impact of traversals on the estimation error. We have done this experiment in other missing data scenarios as well and have arrived at the same conclusion. In Table~\ref{table1}, we compare $\nbse$ (with maximal spanning path) against other algorithms.

\begin{remark}
Because of our missing data model, a significant number of elements close to the diagonal of the adjacency matrix are observed. Therefore, probability matrix estimation suffers the least from our missing data model for graphons that are strongly concentrated near the $x = y$ line. This effect is clearly seen in graphon (c) and also to some extent in graphon (f) (see Figure~\ref{fig:graphon-t2-err} and Table~\ref{table1}).
\end{remark}

\begin{figure}[!htbp]
  \caption{Comparison between various algorithms ($T = 2$): NBS = Neighbourhood Smoothing; SBA = Stochastic Blockmodel Approximation; EDS = Empirical Degree Sorting; MC = Matrix Completion; USVT = Universal Singular Value Thresholding; NBSE = Neighbourhood Smoothing, Extended. We are plotting the estimation error (averaged over $5$ independent replications) as a function of the overlap $m$ in the scale of $n$.}
   \label{fig:graphon-t2-err}
   \centering
   \begin{tabular}{ccc}
   \includegraphics[width = 0.3\textwidth]{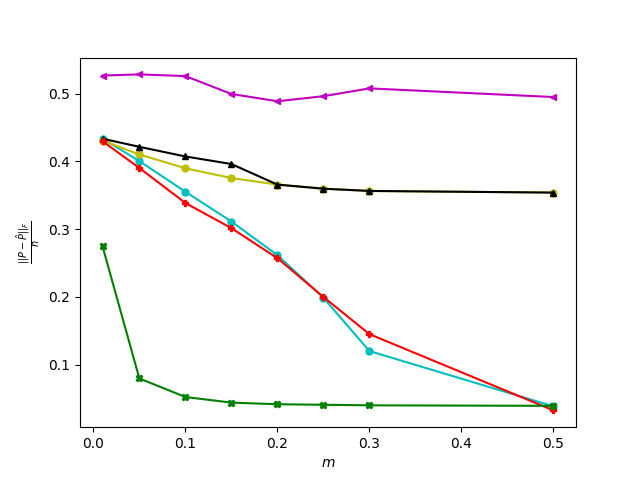} & \includegraphics[width = 0.3\textwidth]{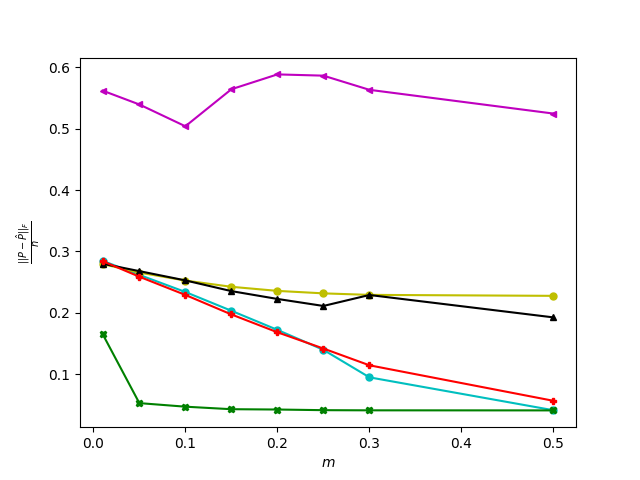} & \includegraphics[width = 0.3\textwidth]{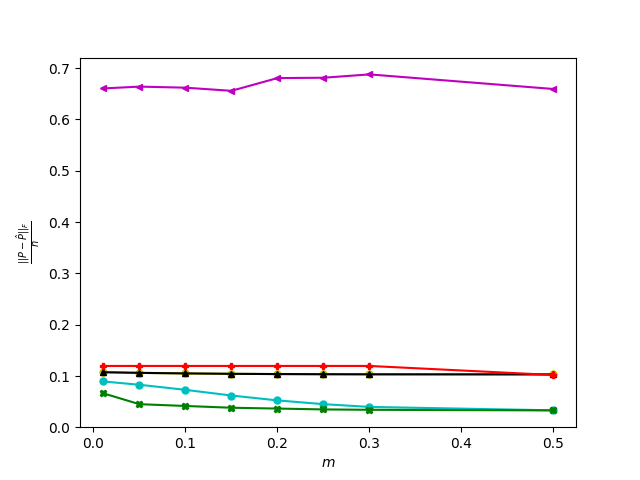} \\
   (a) & (b) & (c) \\
   \includegraphics[width = 0.3\textwidth]{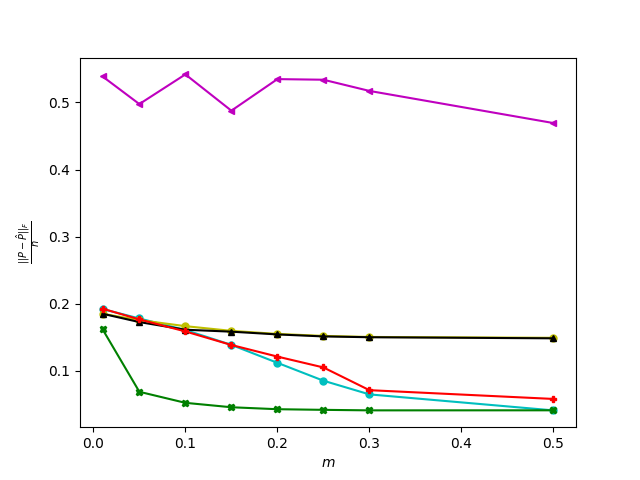} & \includegraphics[width = 0.3\textwidth]{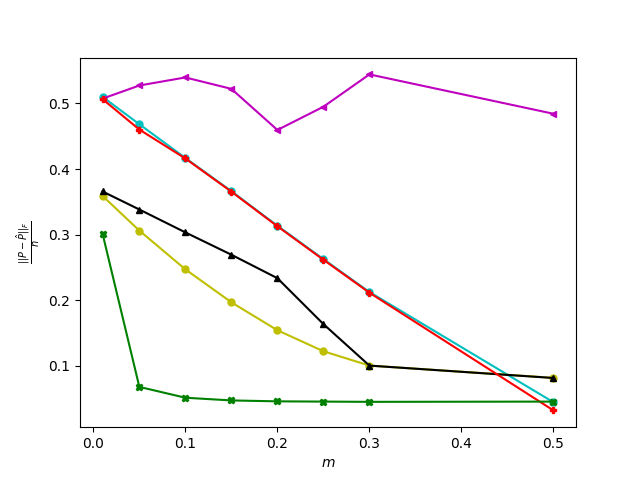} & \includegraphics[width = 0.3\textwidth]{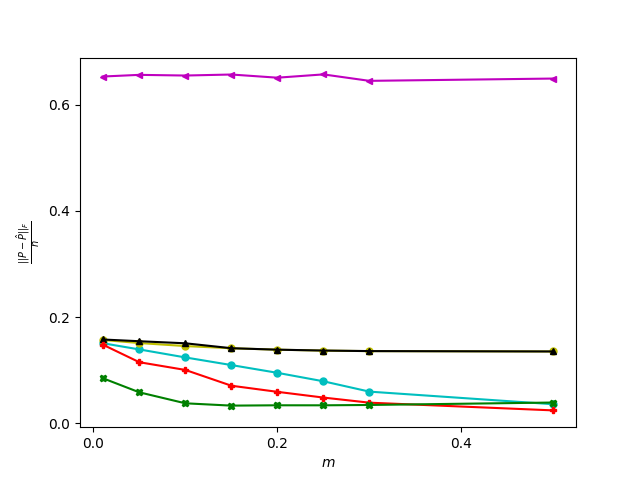} \\
   (d) & (e) & (f) \\
   \end{tabular}
   \includegraphics[width = 0.5\textwidth]{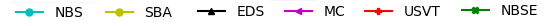}
\end{figure}

\begin{figure}[!htbp]
  \caption{Example $A^{\mathrm{obs}}$ (after suitable permutation of vertices to make the plots look nicer) in various types of missing data scenarios considered. Blacked out part of a maxtrix is not observed. (a) $T = 2$; (b) $T = 5$, for simulated graphons; (c) $T = 5$, with \textbf{frb59-26-4} data; (d) $T = 5$, with \textbf{bn-mouse-retina\_1} data; (e) $T = 5$, with \textbf{econ-beaflw} data.}
   \label{fig:covers}
   \centering
   \begin{tabular}{ccccc}
   \includegraphics[width = 0.18\textwidth]{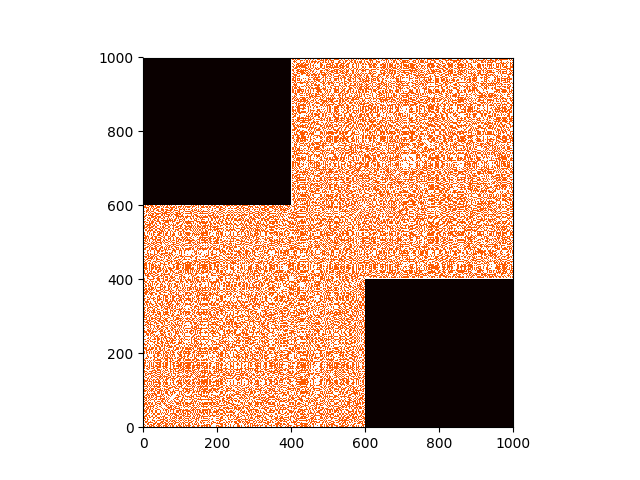} & \includegraphics[width = 0.18\textwidth]{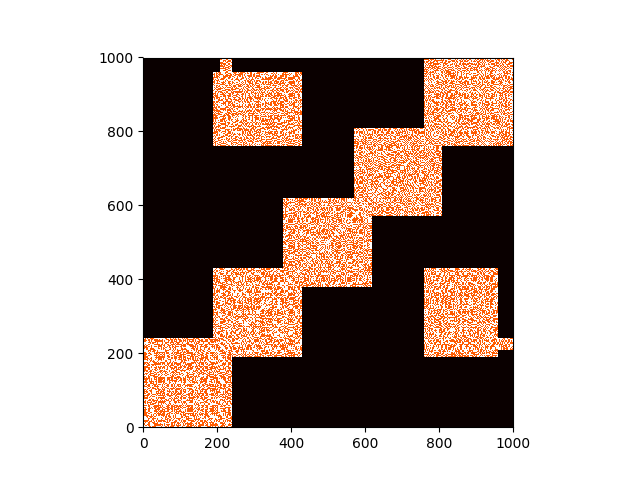} & \includegraphics[width = 0.18\textwidth]{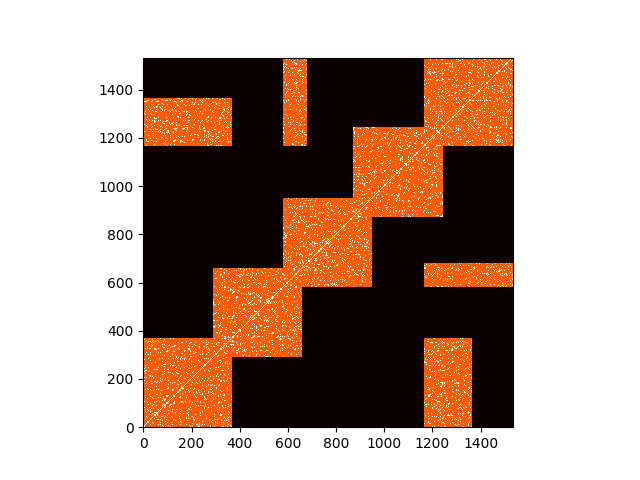} & \includegraphics[width = 0.18\textwidth]{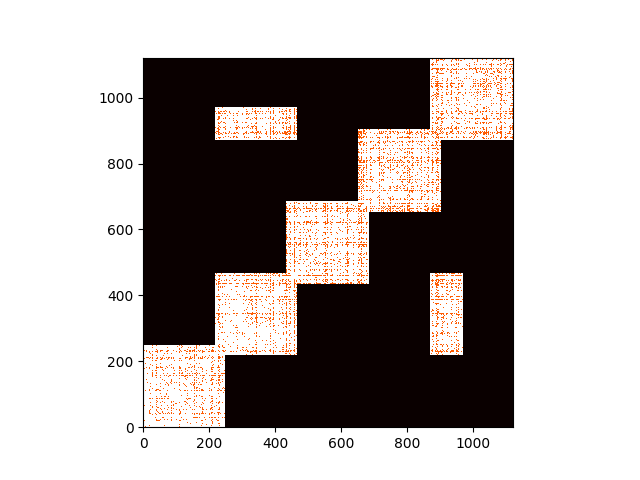} & \includegraphics[width = 0.18\textwidth]{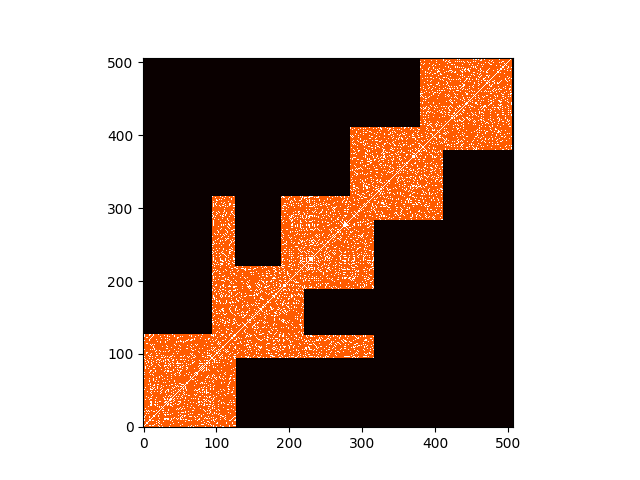} \\
   (a) & (b) & (c) & (d) & (e) \\
   \end{tabular}
\end{figure}

\begin{figure}[!htbp]
  \caption{Error of $\nbse$ when only one traversal (equivalent to paths here) traversal was used, versus when an averaged distance matrix over three paths was constructed as in the $\nbse_0$ algorithm (for graphons (a)-(f) respectively). The missing data scenario is as in Figure~\ref{fig:covers}-(b). Path 2, represented by the second bar, is in fact a maximal spanning tree of the corresponding super-graph.}
  \label{fig_barplot}
   \centering
   \begin{tabular}{ccc}
   \includegraphics[width = 0.30\textwidth]{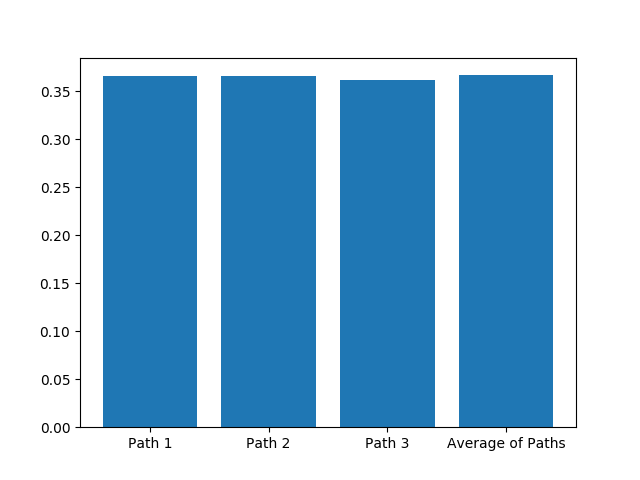} & \includegraphics[width = 0.30\textwidth]{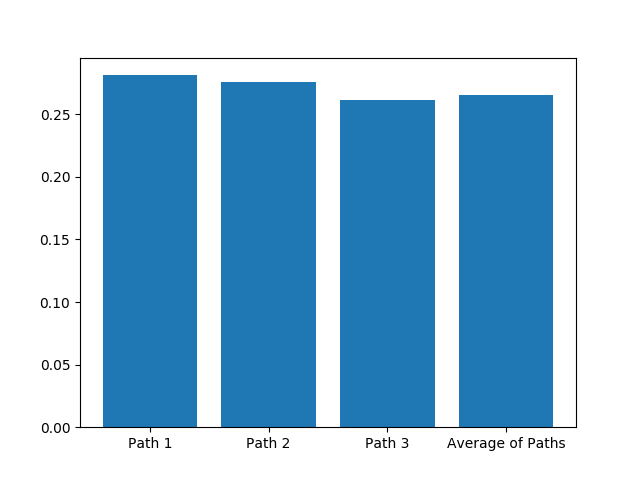} & \includegraphics[width = 0.30\textwidth]{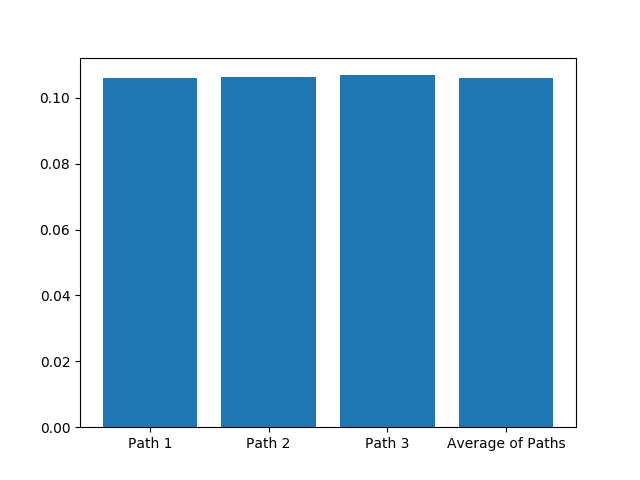} \\
   (a) & (b) & (c) \\
   \includegraphics[width = 0.30\textwidth]{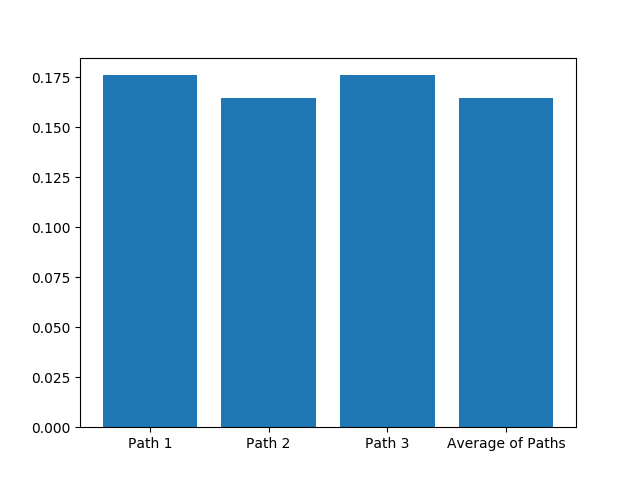} & \includegraphics[width = 0.30\textwidth]{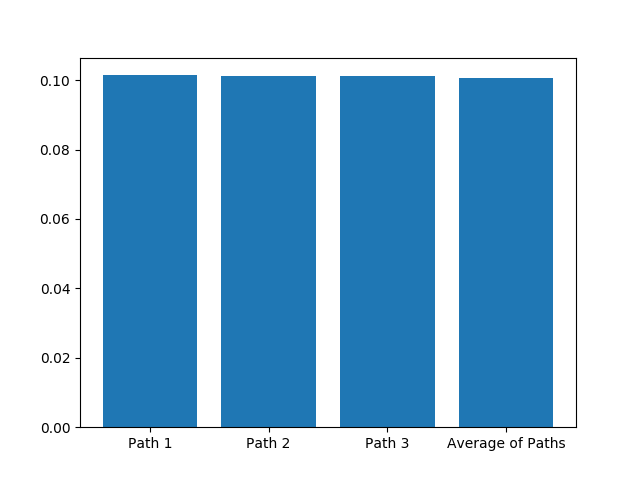} & \includegraphics[width = 0.30\textwidth]{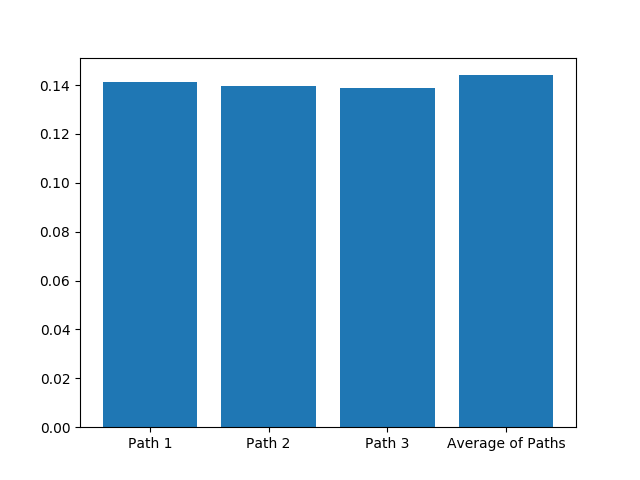} \\
   (d) & (e) & (f) \\
   \end{tabular}
\end{figure}

\begin{table}[!htbp]
\caption{Comparison of various graphon estimation techniques in a missing data scenario as depicted in Figure~\ref{fig:covers}-(b). For $\nbse$, we used the maximal spanning tree, which is a path in this example.}
\label{table1}
\centering
\begin{tabular}{|c||c|c|c|c|c|c|} \hline
     & NBSE & SBA & EDS & NBS & MC & USVT \\ \hline \hline
    (a) & \textbf{0.356634} & 0.479439  & 0.4978463 & 0.4884795 & 0.5099268   & 0.4646004 \\
    (b) & \textbf{0.132775} & 0.3179906 & 0.3132587 & 0.3244988 & 0.5186964 & 0.3057013 \\
    (c) &   0.100620  & 0.109924  & 0.1099362 & \textbf{0.09784396} & 0.6564036 & 0.1193309\\
    (d) & \textbf{0.163530} & 0.2055252 & 0.2047327 & 0.2102199 & 0.607879  & 0.1941453\\
    (e) & \textbf{0.100737} & 0.4689145 & 0.5058807 & 0.5782434 & 0.2328483 & 0.5729871\\
    (f) & \textbf{0.125929} & 0.1709241 & 0.1709545 & 0.1658264 & 0.5962249 & 0.1793617\\ \hline
 \end{tabular}
 \end{table}





\subsection{Real Data}
When applying the method on real networks, we do not actually have $P$. So, given a real network $A$, we first estimate $P$ based on the full graph, call it $\hat{P}_{\text{full}}$. Then we sample subgraphs $A_i, i = 1, \ldots, T$, with some degree of overlap, and based on these subgraphs only (i.e. on the incompletely observed full network) apply an algorithm to get an estimate $\hat{P}$. Then $\frac{1}{n}\|\hat{P} - P_{\text{full}}\|_F$ measures how much the incompleteness (or lack of overlap) influences the algorithm.

We do our experiments on three different datasets\footnote{All collected from Network Repository \citep{nr-aaai15}.} (see Figures~\ref{exp:realdata_gen} and \ref{exp:realdata_err}):
\begin{enumerate}
  \item \textbf{frb59-26-4:} This dataset contains benchmark graphs for testing several NP-hard graph algorithms including but not limited to the maximum clique, the maximum independent set, the minimum vertex cover and the vertex coloring problems. It has $1534$ nodes with about $1$ million edges.
  \item \textbf{bn-mouse-retina\_1:} In this dataset of a brain network edges represent fiber tracts that connect one vertex to another. It has $1122$ nodes and about $577.4$ thousand edges.
  \item \textbf{econ-beaflw:} This is an economic network that has $507$ nodes and about $53$ thousand edges.
\end{enumerate}

In Figure~\ref{exp:realdata_gen}, we plot the adjacency matrices of these graphs and also the probability matrix estimates obtained via $\nbse$ under various missing data scenarios. From Figure~\ref{exp:realdata_err}, we see that $\nbse$ suffers the least from lack of overlap.

\begin{figure}[!htbp]
  \caption{(a) Completely observed graph (orange indicates edges, white absence thereof); (b) estimated probability matrix from $\nbse$ when $m = 0.01n$, in the $T = 2$ case; (c) estimated probability matrix from $\nbse$ when $m = 0.1n$, in the $T = 2$ case; (d) estimated probability matrix from $\nbse$ when $m = 0.3n$, in the $T = 2$ case; (e) estimated probability matrix from $\nbse$ for a traversal in the general case ($T = 5$). Rows 1, 2, 3 correspond to the datasets \textbf{frb59-26-4}, \textbf{bn-mouse-retina\_1}, and \textbf{econ-beaflw} respectively.}
  \label{exp:realdata_gen}
  \centering
  \begin{tabular}{ccccc}
    \includegraphics[width=0.18\textwidth]{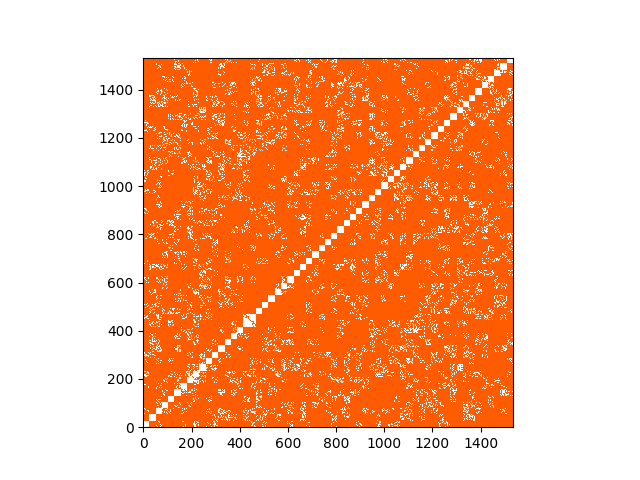} & \includegraphics[width=0.18\textwidth]{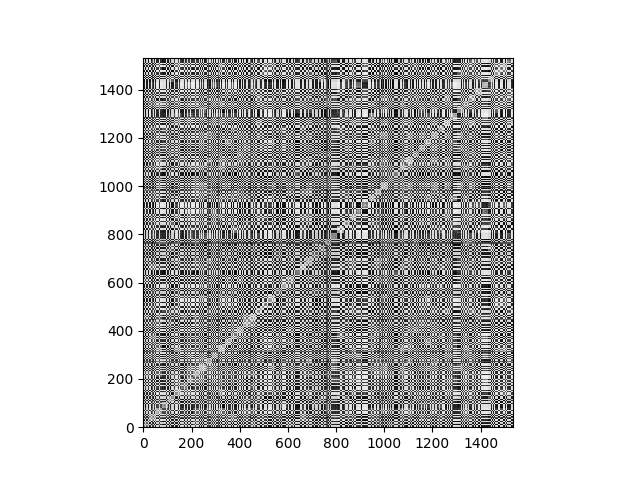} & \includegraphics[width=0.18\textwidth]{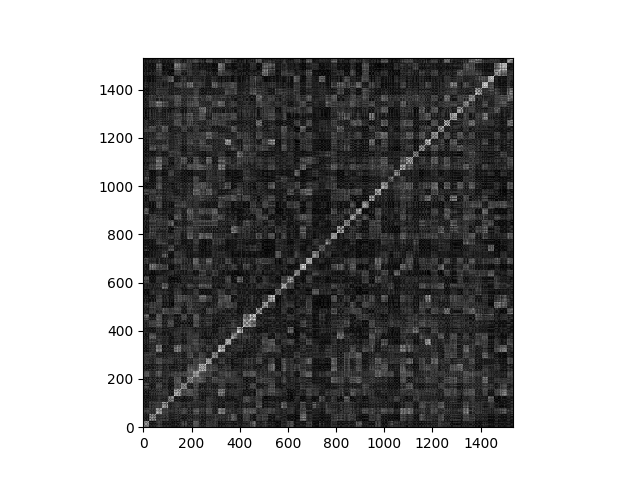} & \includegraphics[width=0.18\textwidth]{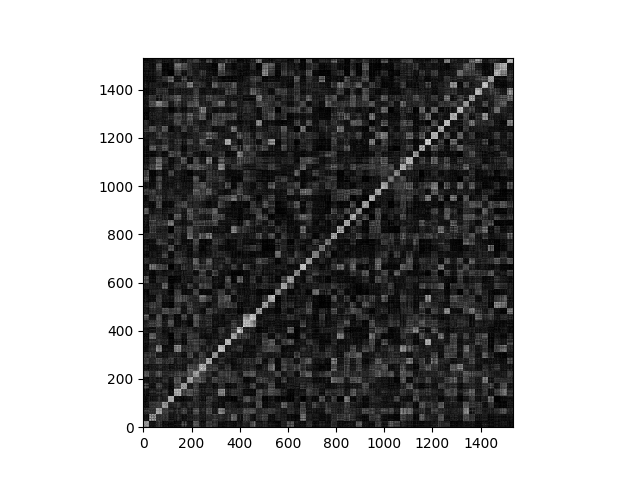} & \includegraphics[width=0.18\textwidth]{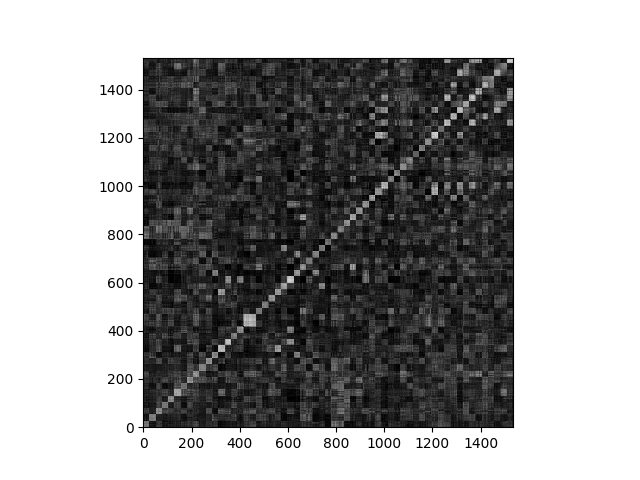} \\

    \includegraphics[width=0.18\textwidth]{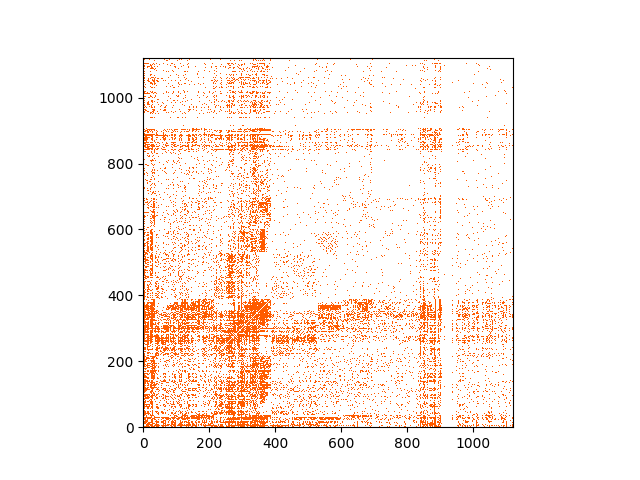} & \includegraphics[width=0.18\textwidth]{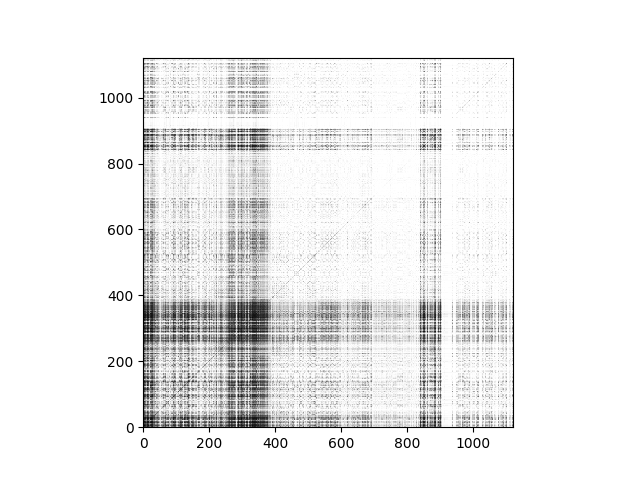} & \includegraphics[width=0.18\textwidth]{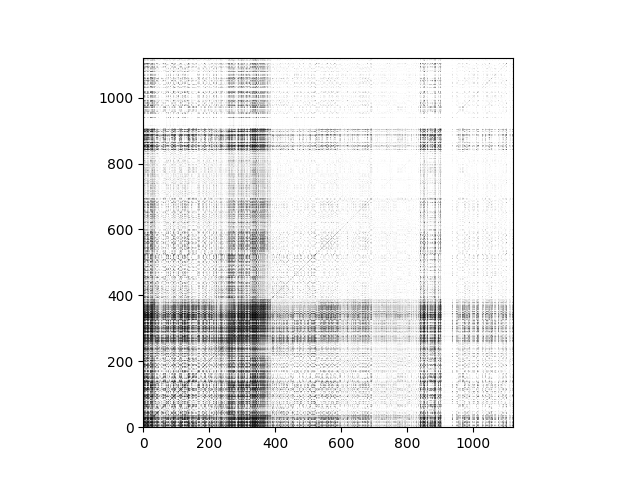} & \includegraphics[width=0.18\textwidth]{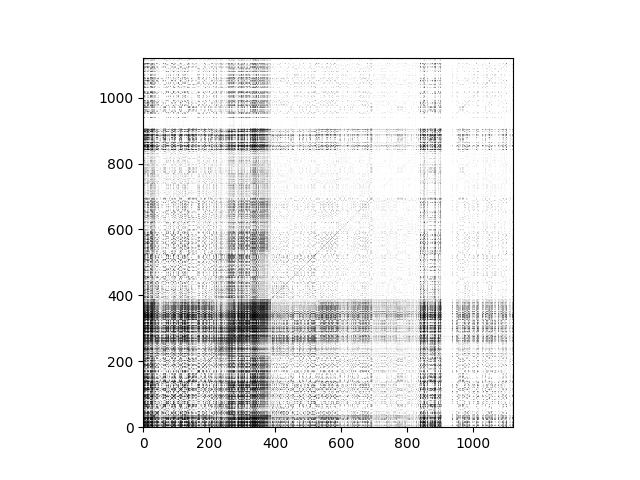} & \includegraphics[width=0.18\textwidth]{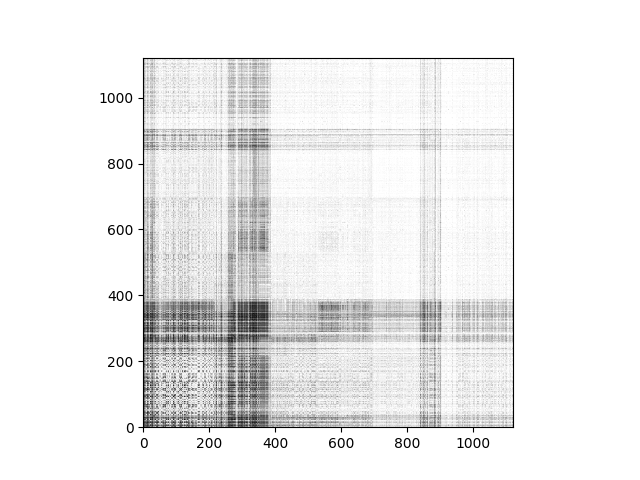} \\

    \includegraphics[width=0.18\textwidth]{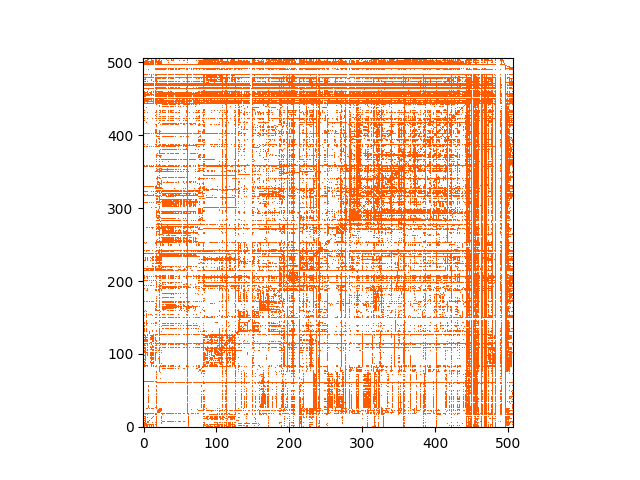} & \includegraphics[width=0.18\textwidth]{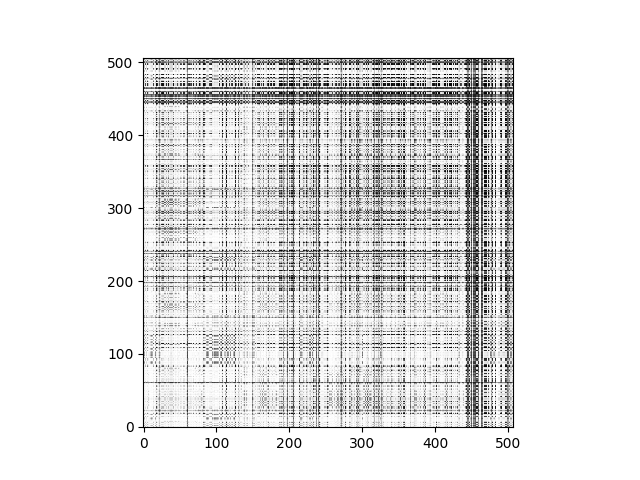} & \includegraphics[width=0.18\textwidth]{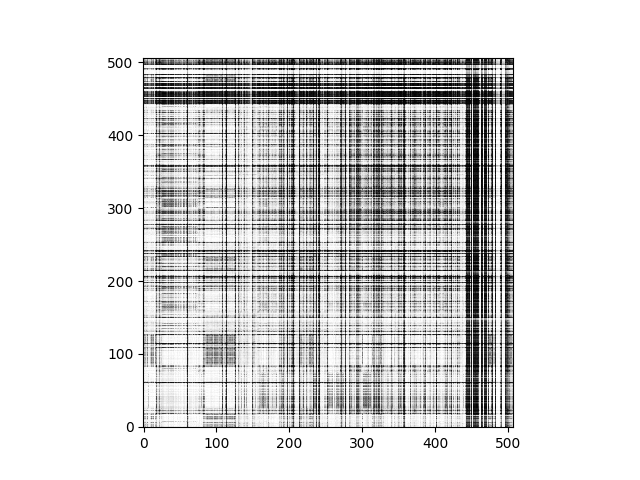} & \includegraphics[width=0.18\textwidth]{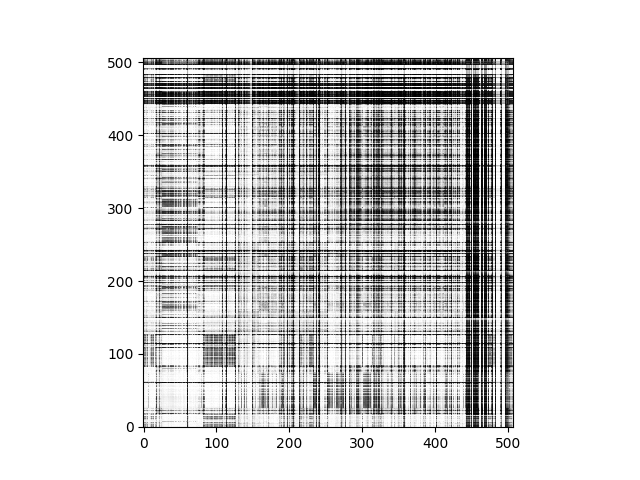} & \includegraphics[width=0.18\textwidth]{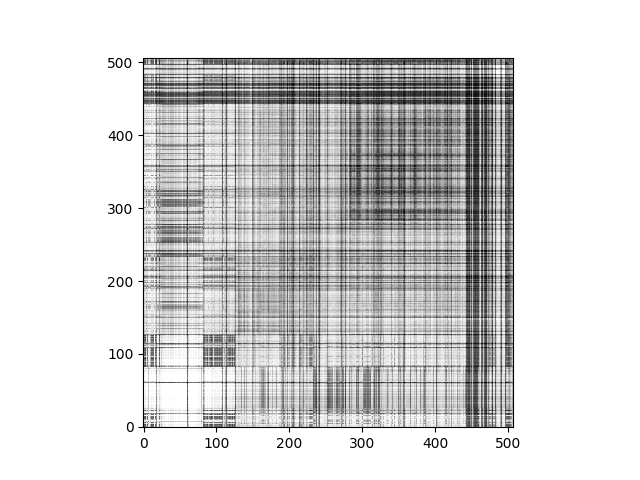} \\

    (a) & (b) & (c) & (d) & (e)
  \end{tabular}
\end{figure}

\begin{figure}[!htbp]
  \caption{Sensitivity of different graphon estimation algorithms to overlap $m$ in the scale of $n$ ($T = 2$).}
  \label{exp:realdata_err}
  \centering
  \includegraphics[width = 0.5\textwidth]{Figures/legend.jpg}
  \begin{tabular}{ccc}
    \includegraphics[width=0.27\textwidth]{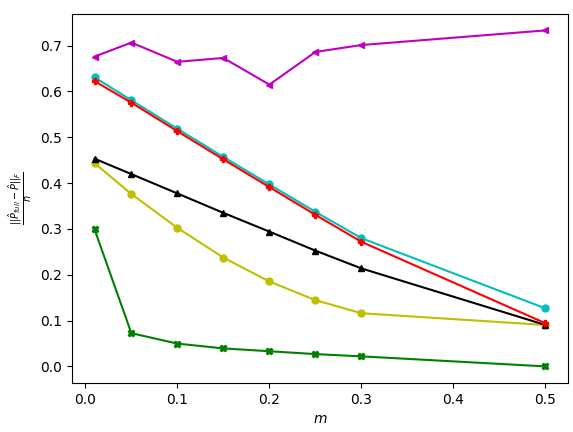} & \includegraphics[width=0.30\textwidth]{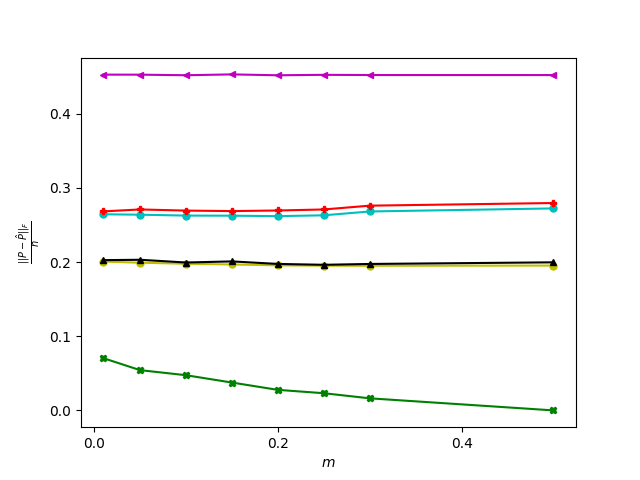} & \includegraphics[width=0.30\textwidth]{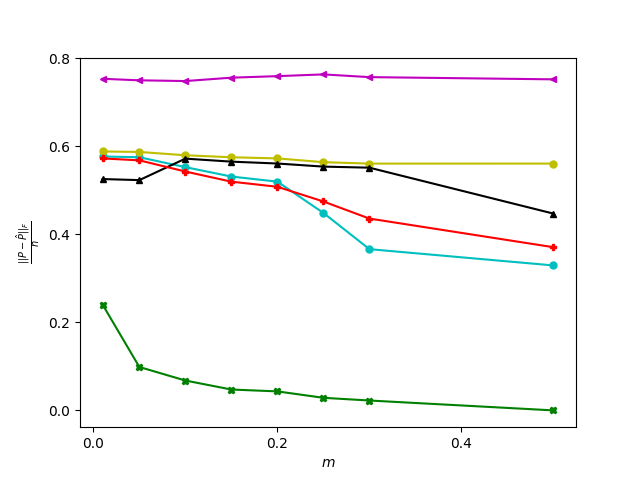} \\
    frb59-26-4 & bn-mouse-retina\_1 & econ-beaflw
  \end{tabular}
\end{figure}

\section{Conclusion}\label{sec:conc}
In conclusion, we have considered the estimation of the probability matrix of a network coming from a graphon model under a missing data set-up, where one only observes certain overlapping subgraphs of the network in question. We have extended the neighbourhood smoothing ($\nbs$) algorithm of \citet{zhang2017estimating} to this missing data set-up. We have shown experimentally that the proposed extension vastly outperforms standard graphon estimation techniques. We leave the study of theoretical properties such as obtaining the rate of convergence, how it depends on the degree of overlap, the number of subgraphs, etc. to future work.

\subsubsection*{Acknowledgments}
Thanks to Ananya Mukherjee for spotting an error in an earlier version of the paper.
\newpage
\bibliographystyle{apalike}
\bibliography{ige}

\end{document}